\theoremstyle{definition}
\newtheorem{definition}{Definition}
\newtheorem{theorem}{Theorem}
\newtheorem{lemma}{Lemma}
\newtheorem{assumption}{Assumption}
\newtheorem{problem}{Problem}
\newcommand{\blue}[1]{\textcolor{black}{#1}}
\title{
Constraint-Driven Optimal Control of Multi-Agent Systems: A Highway Platooning Case Study
}
\author{Logan E. Beaver, \emph{Student Member, IEEE},  Andreas A. Malikopoulos, \emph{Senior Member, IEEE} 
	\thanks{This research was supported by ARPAE's NEXTCAR program under the award number DE-AR0000796. This support is gratefully acknowledged.}
	\thanks{L.E. Beaver and A.A. Malikopoulos are with the Department of Mechanical Engineering, University of Delaware, Newark, DE, USA (emails: lebeaver@udel.edu, andreas@udel.edu).}%
}
\begin{document}

\maketitle
\thispagestyle{empty}

\begin{abstract}

Platooning has been exploited as a method for vehicles to minimize energy consumption.
In this article, we present a constraint-driven optimal control framework that yields emergent platooning behavior for connected and automated vehicles operating in an open transportation system.
Our approach combines recent insights in constraint-driven optimal control with the physical aerodynamic interactions between vehicles in a highway setting.
The result is a set of equations that describes when platooning is an appropriate strategy, as well as a descriptive optimal control law that yields emergent platooning behavior.
Finally, we demonstrate these properties in simulation. 
\end{abstract}

\begin{IEEEkeywords} 
complex systems, intelligent vehicles, multi-agent systems
\end{IEEEkeywords}

\section{Introduction}

\IEEEPARstart{M}{ulti-agent} systems have attracted considerable attention in many applications due to their natural parallelization, general adaptability, and ability to self-organize \cite{Oh2017}.
This has proven useful in many applications of complex systems \cite{Malikopoulos2016}, such as emerging mobility systems \cite{Malikopoulos2020}, construction \cite{Lindsey2012ConstructionTeams}, and surveillance \cite{Corts2009}.
A recent push in constraint-driven control has brought the idea of long-duration autonomy to the forefront of multi-agent systems research \cite{Egerstedt2018RobotAutonomy}.
For long-duration autonomy tasks, robots are left to interact with their environment on timescales significantly longer than what can be achieved in a laboratory setting.
These approaches necessarily emphasize safe energy-minimizing control policies for the agents, whose behaviors are driven by interactions with the environment. Several applications of constraint-driven multi-agent control have been explored recently \cite{Notomista2019AnSystems,Ibuki2020Optimization-BasedBodies,Beaver2020Energy-OptimalConstraints}.

In this article, we propose a constraint-driven approach to generate emergent platooning behavior in a fleet of connected and automated vehicles (CAVs) operating in highway conditions.
Platooning behavior has been of particular interest due to the high potential for energy savings over long distances.
Early results by Athans \cite{athans1968} laid the groundwork for more recent results for highway driving \cite{shladover1991}, cooperative adaptive cruise control \cite{wang2017}, and mixed-traffic platooning \cite{mahbub2021_platoonMixed}.
We believe that platoon formation for long-distance highway travel is a natural fit for constraint-driven control.
There are several approaches to optimal platoon formation in the literature.
In one example, the authors sought to optimally position differential drive robots in an echelon formation such that energy lost to drag was minimized \cite{Bedruz2019DynamicOptimization}.
Reynolds' flocking rules were applied to highway vehicles in \cite{Fredette2017Fuel-SavingControl}, which sought to minimize energy consumption while maintaining a desired speed.
Energy-efficient flocking was also proposed for a system of flying robots in $\mathbb{R}^2$ \cite{Yang2016LoveControl}.
Previous approaches either construct a large multi-objective optimization problem to determine the next control action, or they apply sub-optimal consensus algorithms to reach a drafting configuration.
A recent review of these techniques is presented in \cite{Beaver2020AnFlockingb}.

Our approach, in contrast to existing work, is constraint-driven.
In our framework, agents seek to expend as little energy as possible subject to a set of task and safety constraints.
This set-theoretic approach to control is interpretable, i.e., the cause of an agent's action can be deduced by examining which constraints become active during operation.
By examining the conditions that lead to an empty feasible space, our framework also addresses when a vehicle should break away to form a new platoon or overtake the preceding vehicle. 
Our approach is totally decentralized, and thus it is well-suited to ``open systems," where agents may suddenly enter or leave.
We allow vehicles to arbitrarily enter or exit the system as long as their initial state is feasible and no other vehicles' safety constraint is violated.
This also allows vehicles to keep their final destination and arrival time private, which has the secondary benefit of guaranteeing privacy for all vehicles and their passengers.

The remainder of the article is organized as follows.
In Section \ref{sec:problem}, we formulate the platoon formation problem, and in Section \ref{sec:solution}, we present our decentralized constraint-driven control algorithm.
In Section \ref{sec:sim}, we validate our results by simulating $60$ vehicles, 
where vehicles randomly enter and leave the road network while the total number of vehicles is not known a priori.
Finally, we draw conclusions and propose some directions for future research in Section \ref{sec:conclusion}.

\section{Problem Formulation}\label{sec:problem}

We consider a set of CAVs traveling in a single-lane roadway. 
In particular, we consider an open transportation system that contains $N(t)\in\mathbb{N}$ CAVs indexed by the set $\mathcal{N}(t) = \{0, 1, 2, \dots, N(t)-1\}$, where $t\in\mathbb{R}$ is time and vehicle $i\in\mathcal{N}(t)\setminus\{0\}$ is in the aerodynamic wake of vehicle $i-1$.
We denote the state of each CAV $i\in\mathcal{N}(t)$ by $\mathbf{x}_i(t) = \begin{bmatrix} p_i(t), v_i(t) \end{bmatrix}^T,$ where $p_i(t), v_i(t) \in\mathbb{R}$ are the longitudinal position and speed of vehicle $i$ on its current path respectively.
Each vehicle obeys the second-order dynamics
\begin{align}
    \dot{p}_i(t) &= v_i(t), \notag\\
    \dot{v}_i(t) &= a_i(t) = u_i(t) - F_i\big(v_i(t),\hat{p}_i(t)\big), \label{eq:dynamics}
\end{align}
where $a_i(t)$ is acceleration, $u_i(t)$ is forward force imparted through the tires, $F_i\big(v_i(t), \hat{p}_i(t)\big)$ is the aerodynamic drag force acting on the vehicle, and $\hat{p}_i(t)$ is the relative position of CAV $i$, which we formally define later.
The objective of each vehicle is to minimize the effect of the external drag force, i.e.,
\begin{equation} \label{eq:objective}
    J_i(v_i(t), \hat{p}_i(t)) = \frac{1}{2} F_i\big(v_i(t), \hat{p}_i(t)\big)^2.
\end{equation}
By minimizing the external drag force of each vehicle, we have direct benefits in energy consumption.
Each vehicle $i$ is subject to state and control constraints, i.e.,
\begin{align}
    0 < v_{\min} &\leq v_i(t) \leq v_{\max}, \label{eq:vConst}\\
    a_{\min} &\leq a_i(t) \leq a_{\max},  \label{eq:aConst}
\end{align}
where \eqref{eq:vConst} is the lower and upper speed limit and \eqref{eq:aConst} is the maximum safe deceleration and acceleration.

We index the vehicles in descending order, i.e., $p_i(t) < p_j(t)$ for all $i > j, \,\, i,j\in\mathcal{N}(t)$.
Note, when a vehicle enters or exits the system, the CAVs can communicate to re-sequence themselves.
To simplify our notation, we introduce the \emph{relative state} coordinates.
\begin{definition}
For each vehicle $i\in\mathcal{N}(t)$, the relative states and control action are,
\begin{align}
    \hat{p}_i(t) &= 
    \begin{cases}
    p_i(t) & \text{ if } i = 0, \\
    p_i(t) - p_{i-1}(t) & \text{ if } i > 0,
    \end{cases} \label{eq:phat}\\
    \hat{v}_i(t) &= 
    \begin{cases}
    v_i(t) & \text{ if } i = 0, \\
    v_i(t) - v_{i-1}(t) & \text{ if } i > 0,
    \end{cases} \label{eq:vhat}\\
    \hat{a}_i(t) &= 
    \begin{cases}
    \dot{v}_i(t) & \text { if } i = 0, \\
    \dot{v}_i(t) - \dot{v}_{i-1}(t) & \text{ if } i > 0. \label{eq:ahat}
    \end{cases}
\end{align}
Note that in this coordinate system $\hat{p}_i(t) < 0$ for $i>0$.
While our approach does not impose a reference frame, it may be practical for a physical vehicle to measure \eqref{eq:phat} - \eqref{eq:vhat} directly, i.e., by using a proximity sensor. In that case, it may be advantageous for each vehicle to consider its current state as the center of a moving reference frame.

\end{definition}
To guarantee safety we impose the following safety constraint,
\begin{equation} \label{eq:safeDist}
    \hat{p}_i(t) + \delta \leq 0, \quad i\in\mathcal{N} \setminus \{0\},
\end{equation}
where $\delta \in\mathbb{R}_{>0}$ is the minimum safe bumper-to-bumper following distance.
However, na\"ively satisfying \eqref{eq:safeDist}  may still lead to unsafe scenarios and collisions.
Consider the case when $\hat{v}_i(t)$ is very large and vehicle $i-1$ applies $a_{i-1}(t) = a_{\min}$. 
To guarantee vehicle $i$ never ends up in an unsafe scenario, we impose an augmented safety constraint that guarantees sufficient stopping distance,
\begin{align} \label{eq:rearSafety}
    g_i^s(v_i, \hat{p}_i, \hat{v}_i) = 
    \begin{cases}
        \hat{p}_i(t) + \delta \quad\quad\quad\,\,\text{ if } \hat{v}_i(t) \leq 0,
        \\
        \hat{p}_i(t) + \delta + \hat{v}_i(t)\cdot\Big( \frac{v_{\min} - v_i(t)}{a_{\min}} \Big) \\
        \quad\quad\,+ \frac{\hat{v}_i(t)^2}{2 a_{\min}} \quad\quad\text{ if } \hat{v}_i(t) > 0,
    \end{cases}
\end{align}
for $i \in \mathcal{N} \setminus\{0\}$.
The case when $\hat{v}_i(t) > 0$ in \eqref{eq:rearSafety} is derived for CAV $i$ by assuming $i-1$ applies the maximum braking force until $v_{i-1}(t) = v_{\min}$ at some time $t_1$ and cruises with $a_{i-1}(t) = 0$, for $t \geq t_1$.
Then, \eqref{eq:rearSafety} allows CAV $i$ sufficient stopping distance to brake at $a_{\min}$ and maintain $\hat{p}_i(t) + \delta = 0$.
Note that the quadratic $\hat{v}_i(t)$ term is zero when $\hat{v}_i(t) = 0$ and increases up to a maximum at $\hat{v}_i(t) = v_{\max} - v_{\min}$.
Thus, satisfaction of \eqref{eq:rearSafety} always implies \eqref{eq:safeDist}.

Finally, each vehicle $i\in\mathcal{N}(t)$ has a terminal time $t_i^f$, which corresponds to the time that vehicle $i$ will exit the system, e.g., take an exit off the highway.
The value of $t_i^f$ is known only to vehicle $i$ and is not shared with any other vehicle.
This also ensures the privacy of vehicle $i$'s destination.
To ensure vehicle $i$ reaches its destination by time $t_i^f$, we impose an arrival deadline constraint,
\begin{equation} \label{eq:deadline}
    \big(S_i - p_i(t)\big) - \big(t_i^f - t\big) \, v_i(t) \leq 0,
\end{equation}
where $S_i$ is the position that $i$ will exit the system, e.g., via an off-ramp.
The arrival deadline constraint \eqref{eq:deadline} ensures that vehicle $i$ can reach its final destination by cruising at a constant speed.

Our objective in this article is the formation of platoons for long-duration autonomy, e.g., long-distance highway conditions.
Therefore, once vehicle $i$ joins a platoon, i.e., $\hat{v}_i = 0$ and $\hat{p}_i + \delta = 0$, other techniques, such as control barrier functions \cite{Zhu2019Barrier-function-basedConstraint} and consensus approaches \cite{Hendrickx2020TrajectoryPlatoons}, can be used to maintain the platoon.
To minimize \eqref{eq:objective} for our long-duration autonomy task, we impose the following assumptions.

\begin{assumption} \label{smp:aerodynamics}
    We neglect the effects of wind, and assume the air has constant properties.
    For vehicle $i\in\mathcal{N}\setminus\{0\}$ the drag force is zero at $F_i\big(0, \hat{p}_i(t)\big)$, increasing in $v_i(t)$, and decreasing in $\hat{p}_i(t)$.
    For vehicle $i=0$ the drag force is zero at $F_i\big(0, \hat{p}_i(t)\big)$, increasing in $v_i(t)$, and constant in $\hat{p}_i(t)$.
\end{assumption}

Assumption \ref{smp:aerodynamics} is the crux of our analysis, as it determines the signs of the derivatives of the cost function.
This assumption is not restrictive, and it can be relaxed if the partial derivatives of $F_i$ can be calculated or measured.
Different forms of $F_i$ will result in different vehicle behavior, and this can be interpreted as a data-driven forcing function.
For physical systems containing wind, eddies, and other turbulent effects, $F_i$ may be thought of as an average or filtered drag force; sensing the average aerodynamic forces between vehicles in real time is an active area of ongoing research \cite{Tagliabue2020TouchMultirotor}.

\begin{assumption} \label{smp:shortDrag}
    The drag acting on vehicle $i$ is only a function of the states of vehicles $i$ and $i-1$ for $i\in\mathcal{N}\setminus\{0\}$, and there are no external noise or disturbances.
\end{assumption}

\begin{assumption} \label{smp:comms}
    Communication between CAVs occurs instantaneously and noiselessly.
\end{assumption}

Assumptions \ref{smp:shortDrag} and \ref{smp:comms} idealize the environment in which the vehicles are operating to simplify the analysis.
Assumption \ref{smp:shortDrag} may be relaxed by expanding the drag model to include a time-varying component and additional interaction forces.
Likewise, Assumption \ref{smp:comms} can be relaxed by including delays, noise, and packet loss in a communication model.
If the disturbances and delays are bounded, then Assumption \ref{smp:comms} can be relaxed by shrinking the set of feasible actions using standard techniques, e.g., control barrier functions and differential inclusions \cite{Emam2019RobustTestbed}. 
However, we believe this adds significant analytical complexity without changing the fundamental results of our analysis.

\begin{assumption} \label{smp:tracking}
    Each vehicle $i\in\mathcal{N}(t)$ is equipped with a low-level controller that can track the desired acceleration, $a_i(t)$, by controlling the forward force applied to the CAV through $u_i(t)$.
\end{assumption}

Assumption \ref{smp:tracking} allows us to derive the kinematic motion of each CAV without directly considering the applied drag force.
This enables us to generate an analytic closed-form optimal trajectory for each vehicle without the numerical challenges associated with boundary-layer fluid dynamics.
This assumption can be relaxed by considering robust tracking, e.g., control barrier functions, or online learning to estimate and compensate for the aerodynamic interactions.

\section{Optimal Control with Gradient Flow} \label{sec:solution}

We employ \emph{gradient flow} to generate the control input for each vehicle.
This a gradient-based optimization technique, wherein each vehicle's control action is a gradient descent step.
This technique has been used successfully to control multi-agent constraint-driven systems \cite{Notomista2019Constraint-DrivenSystems,Wang2017SafetySystems}.
Our motivation for gradient flow is twofold:
first, planning a trajectory through a fluid boundary layer in real time requires significantly more computational power than what is available to a CAV.
Second, the exit time of the preceding vehicle is an unknown quantity, and so each vehicle cannot quantify the trade-off between accelerating to draft the preceding vehicle versus the energy savings of drafting.
Thus, we take a conservative approach where no vehicle will increase its energy consumption while traveling on the highway.
This approach yields conditions for \emph{when} platooning is an appropriate strategy in addition to \emph{how} the platoon should be formed.

As a first step, we define the set of \emph{safe control inputs} and show that it satisfies recursive feasibility \cite{Lofberg2012OopsMPC}.
For the remainder of the analysis, we omit the explicit dependence of state variables on $t$ when no ambiguity arises.
\begin{definition} \label{def:safeset}
For each vehicle $i\in\mathcal{N} \setminus \{ 0 \}$, the set of safe control inputs is
\begin{align}
    \mathcal{A}_i^s(v_i, \hat{p}_i, \hat{v}_i) = \Big\{ a \in \mathbb{R} ~:~ 
    &a_{\min} \leq a \leq a_{\max}, \notag\\
    &v_i = v_{\max} \implies a \leq 0 \notag\\
    &v_i = v_{\min} \implies a \geq 0 \notag\\
    &g_i^s = 0 \implies \frac{d}{dt}g_i^s \leq 0 \Big\}, \label{eq:safeControl}
\end{align}
where $g_i^s$ is the rear-end safety constraint \eqref{eq:rearSafety}, and $\frac{d}{dt} g_i^s \leq 0$ can be achieved through the control action, $a_i(t)$.
The safe set ensures the state, control, and safety constraints of vehicle $i$ are always satisfied.
\end{definition}

\begin{theorem}{(Recursive Feasibility)} \label{thm:recursive}
For any vehicle $i\in\mathcal{N}\setminus\{0\}$, if the variables $\hat{p}_i(t), \hat{v}_i(t), v_i(t)$ satisfy \eqref{eq:vConst} and \eqref{eq:rearSafety} at time $t_1\in\mathbb{R}$, then the set $\mathcal{A}_i^s$ is non-empty for all $t \geq t_1$.
\end{theorem}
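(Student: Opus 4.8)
The plan is to prove recursive feasibility as an invariance statement assembled from two ingredients: (i) feasibility of the state implies $\mathcal{A}_i^s$ is non-empty, and (ii) any control selected from $\mathcal{A}_i^s$ keeps the state feasible. Together these show that the feasible set $\{v_{\min}\le v_i\le v_{\max},\ g_i^s\le 0\}$ is forward invariant and that $\mathcal{A}_i^s$ remains non-empty along the entire trajectory, which is exactly the claim for all $t\ge t_1$. Ingredient (ii) is essentially Nagumo's theorem: the implications in \eqref{eq:safeControl} state precisely that $\dot v_i=a\le 0$ whenever $v_i=v_{\max}$, that $\dot v_i=a\ge 0$ whenever $v_i=v_{\min}$, and that $\frac{d}{dt}g_i^s\le 0$ whenever $g_i^s=0$; hence on each active boundary the relevant derivative points into the feasible set, so no feasible state can leave it.

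The core of the argument is ingredient (i), which I would establish by exhibiting an admissible control for every feasible state through a short case analysis. If $g_i^s<0$, the safety implication in \eqref{eq:safeControl} is inactive, and since $a_{\min}\le 0\le a_{\max}$ the speed implications alone always leave a non-empty admissible subinterval of $[a_{\min},a_{\max}]$; hence $\mathcal{A}_i^s\ne\emptyset$. If $g_i^s=0$, I would split on the sign of $\hat v_i$ using the two branches of \eqref{eq:rearSafety}. When $\hat v_i\le 0$ the active branch is $g_i^s=\hat p_i+\delta$, whose time derivative is $\dot{\hat p}_i=\hat v_i\le 0$ independently of $a$; the safety implication is therefore satisfied for free, and any $a$ respecting the (compatible) speed bounds lies in $\mathcal{A}_i^s$.

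The delicate case is $\hat v_i>0$ with $g_i^s=0$, where the safety implication genuinely constrains the control. Here I would compute $\frac{d}{dt}g_i^s$ along \eqref{eq:dynamics} and note that it depends on $\hat a_i=a_i-a_{i-1}$, hence on the uncontrolled acceleration $a_{i-1}$ of the preceding vehicle. Since $\mathcal{A}_i^s$ depends only on $(v_i,\hat p_i,\hat v_i)$, the derivative condition must hold robustly in $a_{i-1}$, so I would identify the worst case by checking monotonicity: the coefficient of $\hat a_i$ reduces to $(v_{\min}-v_{i-1})/a_{\min}\ge 0$, so the worst case is maximal braking $a_{i-1}=a_{\min}$, which is exactly the value for which \eqref{eq:rearSafety} was derived. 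Substituting $a_i=a_{\min}$ and $a_{i-1}=a_{\min}$ collapses $\frac{d}{dt}g_i^s$ to zero, so maximal braking is admissible; it also respects the speed bounds, because $\hat v_i>0$ forces $v_i>v_{i-1}\ge v_{\min}$, so the lower-speed implication (the only one that could forbid $a_{\min}$) is inactive. Thus $a_{\min}\in\mathcal{A}_i^s$, and the set is non-empty.

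I expect the main obstacle to be precisely this last case: correctly handling the adversarial coupling through $a_{i-1}$ and confirming that the single control forced by safety (maximal braking) never collides with the lower speed limit. The clean resolution rests on two observations — that the troublesome safety branch $\hat v_i>0$ and the binding lower-speed constraint $v_i=v_{\min}$ are mutually exclusive, and that braking at $a_{\min}$ against a predecessor also braking at $a_{\min}$ renders $g_i^s$ stationary, which is exactly the scenario \eqref{eq:rearSafety} was constructed to tolerate.
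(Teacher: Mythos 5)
Your proposal is correct, and it reaches the paper's conclusion by a genuinely different decomposition. The paper's own proof is trajectory-based: it fixes the single worst-case scenario (vehicle $i-1$ applies $a_{\min}$ until reaching $v_{\min}$, then cruises), differentiates \eqref{eq:rearSafety} to obtain \eqref{eq:safetyDerivative}, and verifies over three consecutive time intervals that vehicle $i$ can respond with $a_i = a_{\min}$ (then cruise) while $\frac{d}{dt}g_i^s$ remains identically zero, so the constraint value is constant and $\mathcal{A}_i^s$ never empties. Your proof instead is state-based: you split the claim into (i) pointwise non-emptiness of $\mathcal{A}_i^s$ at every feasible state and (ii) forward invariance of the feasible set via Nagumo, and you handle (i) by case analysis on whether $g_i^s$ and which branch of \eqref{eq:rearSafety} is active. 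The computational core is identical in both arguments --- the coefficient of $\hat{a}_i$ collapses to $(v_{\min}-v_{i-1})/a_{\min}$, and choosing $a_i = a_{i-1} = a_{\min}$ annihilates $\frac{d}{dt}g_i^s$ --- but your route buys two things the paper leaves implicit: an explicit justification that $a_{i-1}=a_{\min}$ is indeed the worst case (via monotonicity of the derivative in $a_{i-1}$, which the paper merely asserts by calling its scenario ``worst case''), and explicit coverage of the degenerate states ($g_i^s<0$, or $g_i^s=0$ with $\hat{v}_i\le 0$) together with the observation that the binding lower speed limit and the binding safety branch are mutually exclusive. What the paper's approach buys is concreteness: it exhibits an actual feasible closed-loop evolution through the entire worst-case episode, which doubles as a constructive fallback policy. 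One caveat applies equally to both arguments: the function $g_i^s$ in \eqref{eq:rearSafety} is only piecewise smooth, and neither proof treats the switching point $\hat{v}_i = 0$ with $\hat{p}_i+\delta=0$ carefully (where a control making $\hat{a}_i>0$ changes the active branch); this is a shared technicality, not a gap specific to your argument.
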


\begin{proof}
To prove Theorem \ref{thm:recursive}, we show that a feasible control input always exists in the worst case scenario for vehicle $i$.
Let $\hat{v}_i(t_0) \geq 0$ and $a_{i-1}(t) = a_{\min}$ for $t\in[t_0, t_1)$ such that $v_{i-1}(t_1) = v_{\min}$ and $a_{i-1}(t) = 0$ for $t \geq t_1$.
We take the time derivative of \eqref{eq:rearSafety}, which yields
\begin{align}
    \hat{v}_i(t) 
    + \hat{v}_i(t) \cdot \Big( -\frac{a_i(t)}{a_{\min}} \Big)
    &+ \hat{a}_i(t)\Big( \frac{v_{\min} - v_i(t)}{a_{\min}} \Big) \notag\\
    &+ \Big(\frac{\hat{v}_i(t)\,\hat{a}_i(t)}{a_{\min}}\Big).
    \label{eq:safetyDerivative}
\end{align}
Over the interval $[t_0, t_1)$, $v_i(t) > v_{\min}$, and thus $a_i(t) = a_{\min}$ is a feasible control action.
This implies that $\hat{a}_i(t) = 0$, and evaluating \eqref{eq:safetyDerivative} implies
\begin{equation}
    \hat{v}_i(t) + \hat{v}_i(t) \Big(-1\Big) = 0,
\end{equation}
i.e., \eqref{eq:safetyDerivative} is identically zero, which implies that \eqref{eq:rearSafety} is constant.
Next, consider the interval $[t_1, t_2)$ such that $a_i(t) = a_{\min}$ for $t\in[t_1, t_2)$ and $v_i(t) = v_{\min}$ for $t \geq t_2$.
Thus, $a_i(t) = a_{\min}$ is a feasible control action, and evaluating \eqref{eq:safetyDerivative} implies
\begin{align}
    \hat{v}_i(t) 
    -\hat{v}_i(t) 
    + v_{\min} - v_i(t)
    + \hat{v}_i(t) &=
    2\hat{v}_i(t) - 2 \hat{v}_i(t) \notag\\
    &= 0,
\end{align}
which is identically zero over the entire interval.
This implies that \eqref{eq:rearSafety} is constant.
Finally, for $t>t_2$, $a_i(t) = 0$ is a feasible control action, which implies that $\hat{a}_i(t) = 0$, $\hat{v}_i(t) = 0$, and \eqref{eq:rearSafety} is constant.
Therefore, \eqref{eq:rearSafety} is constant for all $t > t_1$ in the worst-case scenario and $\mathcal{A}_i^s\neq \emptyset$ for all $t \geq t_1$.
\end{proof}

Next, before deriving our energy-minimizing constraint, we present the unique equilibrium point that minimizes the energy consumption of each CAV $i\in\mathcal{N}(t)$.
For the lead vehicle, i.e., $i = 0$, the drag force is minimized at $v_{0} = 0$ and increasing in $v_{0}$ by Assumption \ref{smp:aerodynamics}.
It is trivial to show that the lead vehicle's energy consumption is minimized at $v_{0}(t) = v_{\min}$.
For a following vehicle, i.e., $i > 0$, the Karush-Kuhn-Tucker (KKT) conditions yield
\begin{align}
    L &= F^2 + \mu^v(v_{\min} - v_i) + \mu^p(\hat{p}_i + \delta), \\
    \frac{\partial L}{\partial v_i} &= 2 F F_v - \mu^v = 0, \\
    \frac{\partial L}{\partial \hat{p}_i} &= 2 F F_{\hat{p}} + \mu^p = 0, \\
    \frac{\partial L}{\partial \mu^v} &= v_{\min} - v_i = 0, \\
    \frac{\partial L}{\partial \mu^p} &= \hat{p}_i + \delta = 0,
\end{align}
where the subscripts $v$, $\hat{p}$ refer to the partial derivative of $F$ with respect to $v_i(t)$ and $\hat{p}_i(t)$, respectively, and $\hat{p}_i(t) + \delta = 0$ is implied by \eqref{eq:rearSafety} when $v_i(t)=v_{\min}$. 
Given Assumption \ref{smp:aerodynamics}, we can determine the signs of the partial derivatives, which implies
\begin{align}
    v_i &= v_{\min}, & \hat{p}_i &= -\delta, \\
    \mu^v &= 2 F \frac{\partial F}{\partial v_i} > 0, & \mu^p &=  -2F \frac{\partial F}{\partial \hat{p}_i} > 0.
\end{align}
Thus, CAV $i>0$ minimizes its energy consumption by following CAV $i-1$ at speed $v_{\min}$ and distance $\hat{p}_i(t) = \delta$.
Note that, as $F_i$ is strictly increasing in $v_i$ and strictly decreasing in $\hat{p}_i$, thus the platooning formation corresponds to the unique minimum-energy configuration of the $N$ CAVs.

Finally, to minimize the drag force imposed on each vehicle, we implement gradient flow by requiring the time derivative of the cost functional \eqref{eq:objective} to be negative semidefinite for each vehicle $i\in\mathcal{N}(t)$.
For vehicle $i=0$ this implies
\begin{equation}
    \dot{J}_i = \frac{\partial F\big(v_i(t), \hat{p}_i(t)\big)}{\partial v_i(t)} a_i(t) \leq 0,
\end{equation}
which, by Assumption \ref{smp:aerodynamics}, implies that
\begin{equation} \label{eq:flowConstraint0}
    a_i(t) \leq 0 \quad \text{ for } i = 0.
\end{equation}
For vehicle $i > 0$, Assumption \ref{smp:aerodynamics} implies
\begin{align}
    \dot{J}_i =
    \begin{bmatrix}
    (F\, F_{v}) \\
    (F\, F_{\hat{p})}
    \end{bmatrix}
    \cdot
    \begin{bmatrix}
    a_i(t) \\
    \hat{v}_i(t)
    \end{bmatrix} \leq 0. \label{eq:jDotMatrix}
\end{align}
Expanding \eqref{eq:jDotMatrix} yields
\begin{align}
    F\,F_{v}\,a_i(t) + F\,F_{\hat{p}}\,\hat{v}_i(t) \leq 0,
\end{align}
which can be solved for $a_i(t)$ using the signs of the partial derivatives imposed by Assumption \ref{smp:aerodynamics},
\begin{align} \label{eq:flowConstraint}
    a_i(t) \leq \frac{|F_{\hat{p}}|}{F_v} \hat{v}_i(t).
\end{align}

Thus, in order for CAV $i$ to form a platoon with $i-1$, we must have $\hat{v}_i(t) > 0$.
Intuitively this makes sense, if $\hat{v}_i(t) < 0$ then $\hat{p}_i(t)$ is decreasing (increasing the drag force) and $i$ must decelerate to achieve an equivalent decrease in the drag force.
Likewise, $\hat{v}_i(t) > 0$ implies that $\hat{p}_i(t)$ is increasing (decreasing the drag force) and $i$ may accelerate without increasing the overall drag force.

Note that, consistent with multi-agent control barrier functions \cite{Wang2017SafetySystems}, it is possible that imposing \eqref{eq:flowConstraint} and the set of safe control inputs (Definition \ref{def:safeset}) on each vehicle admits no feasible solutions.
In particular, this occurs when
\begin{align}
    v_i(t) = v_{\min} \text{ and }  & \frac{|F_{\hat{p}}|}{F_v} \Big( v_{\min} - v_{i-1}(t) \Big) < 0, \label{eq:infeasible1}\\
    a_i(t) &\leq \frac{|F_{\hat{p}}|}{F_v} \hat{v}_i(t) < a_{\min}. \label{eq:infeasible2}
\end{align}
Similar conditions arise when imposing the deadline constraint on CAV $i$.
Taking the time derivative of \eqref{eq:deadline} yields,
\begin{equation}
    -v_i(t) + v_i(t) - a_i(t) \leq 0,
\end{equation}
which implies $i$ must apply $a_i(t) \geq0$ when \eqref{eq:deadline} is active.
It is possible that vehicle $i$ cannot jointly satisfy the deadline, safety, and drag force constraints.
In particular, if either of
\begin{align}
     S_i - p_i(t) - (t_i^f - t)\,v_i(t) = 0 \text{ and }  & \frac{|F_{\hat{p}}|}{F_v} \hat{v}_i(t) < 0, \label{eq:infeasible3}\\
    S_i - p_i(t) - (t_i^f - t)\,v_i(t) = 0 \text{ and } & g_i^s = 0, \hat{v}_i(t) > 0, \label{eq:infeasible4}
\end{align}
is satisfied, then no control action can guarantee drag minimization, safety, and arrival time simultaneously.
Thus, if CAV $i > 0$ satisfies \eqref{eq:infeasible1} - \eqref{eq:infeasible3}, it must fall back and become the lead CAV of its own platoon.
CAV $i$ will re-initialize itself as index $0$ of a new platoon, and all following CAVs $j > i$ will be re-initialized as $j - i$.
This platoon will operate independently as long as any of \eqref{eq:infeasible1} - \eqref{eq:infeasible3} are satisfied for the vehicle physically ahead of this CAV on the road.
The same test may be applied to determine when two platoons ought to merge into a single platoon.
Similarly, if \eqref{eq:infeasible4} is satisfied, then CAV $i$ is unable to achieve its deadline without violating rear-end safety.
This affords at least $2$ possibilities for CAV $i$, 1) move into a passing lane to overtake the preceding vehicle, or 2) relax the deadline constraint until $i$ becomes the lead CAV of a platoon.
Resolving this conflict depends on the geometry of the roadway and application of interest and is beyond the scope of this paper.
Thus, \eqref{eq:infeasible1} - \eqref{eq:infeasible4} determine whether platooning is an appropriate strategy for CAV $i$.

In addition to the above challenges that arise from the task constraint, selecting an energy-minimizing control law that satisfies \eqref{eq:safeControl} and \eqref{eq:flowConstraint} is, in general, insufficient to generate emergent platooning behavior.
This fact is demonstrated in \cite{Shi2021AreFormation}, which shows that only minimizing energy consumption is not a stable configuration for selfish energy-minimizing agents.
As an illustrative example, consider the case where the initial states of the vehicles are randomly selected from the set of feasible states such that each CAV $i\in\mathcal{N}\setminus\{0\}$ is in the wake of vehicle $i-1$.
This implies a transient period for $i$, where $v_{i-1}(t) > v_{\min}$.
To generate a platoon, we would like to have CAV $i$ achieve and maintain $\hat{v}_i(t) > 0$.
We can consider two cases, for the first case let $\hat{v}_i(t_i^0) > 0$, then $i$ can maximize its energy savings by selecting $a_i(t) = a_{\min}$.
However, this may lead to a situation where $\hat{v}_i(t) = 0$ and $\hat{p}_i(t) + \delta <0$, i.e., the vehicles do not form a platoon.
Thus, vehicle $i$ ought to apply a small, feasible deceleration such that $\hat{v}_i(t) > 0 $ is maintained.
In the second case let $\hat{v}_i(t_i^0) \leq 0$, then $i$ ought to decelerate as little as possible, i.e., \eqref{eq:flowConstraint} should be a strict equality.
Then, if CAV $i-1$ applies a large deceleration, it is possible that $\hat{v}_i(t) > 0$ in the future, and $i$ will be able to join the platoon.
The solution of the following optimization problem can accomplish this behavior.

\begin{problem}\label{prb:controlLaw}
For each CAV $i\in\mathcal{N}\setminus\{0\}$, such that \eqref{eq:infeasible1} and \eqref{eq:infeasible2} are not satisfied, generate the control action that solves
\begin{align*}
    &\min_{a_i(t)} \, \frac{1}{2} a_i(t)^2 \\
    \text{subject to:}&\\
    &a_i(t) \in\mathcal{A}_i^s\big(v_i(t), \hat{p}_i(t), \hat{v}_i(t)\big),\\
    &a_i(t) \leq \frac{|F_{\hat{p}}|}{F_v} \hat{v}_i(t), \\
    &\big(S_i - p_i(t)\big) - v_i(t)\,(t_i^f -t) = 0 \implies a_i(t) \geq 0.
\end{align*}
\end{problem}

Note that each vehicle must solve Problem \ref{prb:controlLaw} to determine its control input at each time step.
In this case, the feasible region is compact, and the solution can be derived offline by determining the upper and lower bound on the feasible space of Problem \ref{prb:controlLaw}.
The optimal solution is the feasible value closest to $0$.
Next, we present our main results that characterize sufficient conditions for platoon formation.

\begin{lemma} \label{lma:upperBound}
For any vehicle $i\in\mathcal{N}(t)$ at any time $t\in\mathbb{R}$, the control action that solves Problem \ref{prb:controlLaw} is upper bounded by $0$.
\end{lemma}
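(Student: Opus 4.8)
The plan is to characterize the feasible set of Problem \ref{prb:controlLaw} as a compact interval $[\ell_i, u_i] \subset \mathbb{R}$---nonemptiness follows from recursive feasibility (Theorem \ref{thm:recursive}) together with the standing hypothesis that \eqref{eq:infeasible1} and \eqref{eq:infeasible2} fail, and compactness is already noted after the problem statement---and then to argue that its \emph{lower} endpoint satisfies $\ell_i \le 0$. Once this is established, the result follows from the structure of the cost $\tfrac{1}{2} a_i^2$: it is strictly convex with unconstrained minimizer at $a_i = 0$, so on $[\ell_i, u_i]$ the solution is the feasible point nearest $0$. If $u_i \ge 0$ then $0 \in [\ell_i, u_i]$ and the optimum is $0$; if $u_i < 0$ the whole interval is negative and the optimum is $u_i < 0$. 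In either case the optimal action is at most $0$, which is the claim.

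The bulk of the argument is therefore showing $\ell_i \le 0$, i.e., that no active constraint in Problem \ref{prb:controlLaw} forces a strictly positive acceleration. I would enumerate every constraint capable of contributing a lower bound on $a_i$. The bound $a_{\min} \le a_i$ contributes $a_{\min} < 0$. The velocity condition in \eqref{eq:safeControl} contributes a lower bound only through $v_i = v_{\min} \implies a_i \ge 0$, and the deadline constraint contributes a lower bound only through its active form $\implies a_i \ge 0$; both yield exactly $0$, never anything positive. Taking the maximum over these candidate lower bounds gives $\ell_i \le 0$.

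To complete this, I must verify that the remaining constraints---the flow constraint \eqref{eq:flowConstraint}, the upper velocity limit $v_i = v_{\max} \implies a_i \le 0$, and the safety condition $g_i^s = 0 \implies \tfrac{d}{dt} g_i^s \le 0$---all act as \emph{upper} bounds on $a_i$ and thus cannot raise $\ell_i$. The flow and upper-velocity conditions are manifestly upper bounds. The main obstacle is the safety condition: I must show that when $g_i^s = 0$ the requirement $\tfrac{d}{dt} g_i^s \le 0$ restricts $a_i$ from above. Differentiating \eqref{eq:rearSafety} as in \eqref{eq:safetyDerivative} and collecting the coefficient of $a_i$, the two $\hat{v}_i$-weighted terms cancel and the net coefficient reduces to $\tfrac{v_{\min} - v_i}{a_{\min}} \ge 0$ (since $v_i \ge v_{\min}$ and $a_{\min} < 0$). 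A nonnegative coefficient means $\tfrac{d}{dt} g_i^s \le 0$ is an upper bound on $a_i$, or, when $v_i = v_{\min}$, it is independent of $a_i$ and the velocity condition already supplies the lower bound $0$. Hence the safety condition never produces a positive lower bound.

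Finally, the lead vehicle $i = 0$ is handled directly: its flow constraint is \eqref{eq:flowConstraint0}, namely $a_0 \le 0$, so $u_0 \le 0$ and its entire feasible set lies in $(-\infty, 0]$; the optimum is then trivially at most $0$. Combining the two cases establishes the bound for every $i \in \mathcal{N}(t)$.
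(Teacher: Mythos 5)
Your proof is correct and follows essentially the same route as the paper's: both observe that the feasible set of Problem \ref{prb:controlLaw} is an interval whose lower endpoint is at most $0$ (only $a_{\min}$, the $v_i = v_{\min}$ condition, and the active deadline constraint can bound $a_i$ from below), so the quadratic cost $\tfrac{1}{2}a_i^2$ selects either $0$, when it is feasible, or a negative point. The only substantive difference is that you explicitly verify the safety condition acts as an upper bound by computing the sign of the coefficient of $a_i$ in $\tfrac{d}{dt}g_i^s$, and you account for the deadline constraint in the lower bound, both of which the paper's proof leaves implicit in writing $\alpha_1 = \sup\{\mathcal{A}_i^s\}$ and asserting the form of $\beta$.
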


\begin{proof}
For vehicle $i=0$, \eqref{eq:flowConstraint0} implies $a_i(t) \leq 0$.

For vehicle $i>0$, let $\alpha_1 = \sup\Big\{\mathcal{A}_i^s\Big\}$, let $\alpha_2 = \frac{|F_{\hat{p}}|}{F_v}\hat{v}_i(t)$, and let $\alpha = \min\{\alpha_1, \alpha_2\}$, i.e., $\alpha$ is the smallest upper bound of Problem \ref{prb:controlLaw}'s feasible space.
For the case when $\alpha \leq 0$, the solution of Problem \ref{prb:controlLaw} is upper bounded by $0$.
For the case when $\alpha > 0$ the lower bound of Problem \ref{prb:controlLaw} is
\begin{equation}
    \beta = 
    \begin{cases}
        a_{\min} & \text{ if } v_i(t) \neq v_{\min}, \\
        0        & \text{ if } v_i(t) = v_{\min},
    \end{cases}
\end{equation}
thus $\beta \leq 0 < \alpha$.
This implies that any control action $a_i(t) > 0$ incurs a higher cost than $a_i(t) = 0$, which is a feasible action in this case.
Thus, the solution of Problem \ref{prb:controlLaw} is always upper bounded by zero.
\end{proof}

\begin{theorem}\label{thm:sufficient}
For two CAVs $i,i-1 \in \mathcal{N}(t)$ the initial condition $v_i(t_i^0) > v_{i-1}(t_i^0)$ guarantees that $i$ and $i-1$ will form a platoon as long as $t_i^f$ and $t_{i-1}^f$ are sufficiently large and the deadline constraint for $i$ does not become active.
\end{theorem}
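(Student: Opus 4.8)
The plan is to track the relative speed $\hat{v}_i$ and relative position $\hat{p}_i$ and show that $\hat{v}_i \to 0$ and $\hat{p}_i \to -\delta$, which is precisely the platoon configuration. First I would apply Lemma \ref{lma:upperBound} to both $i$ and $i-1$ to obtain $a_i(t) \le 0$ and $a_{i-1}(t) \le 0$ for all $t$; since the deadline constraint for $i$ is assumed inactive and $t_{i-1}^f$ is large, neither vehicle is ever forced to accelerate. Consequently $v_i$ and $v_{i-1}$ are each non-increasing and bounded below by $v_{\min}$ via \eqref{eq:vConst}, so both converge and hence $\hat{v}_i(t) \to \hat{v}_i^\ast$ for some limit $\hat{v}_i^\ast$. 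Recursive feasibility (Theorem \ref{thm:recursive}) guarantees the control law is well defined for all $t$ and that $g_i^s \le 0$; as argued in the discussion of \eqref{eq:rearSafety}, this implies $\hat{p}_i(t) \le -\delta$, so $\hat{p}_i$ is bounded above.

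Next I would establish the invariant $\hat{v}_i(t) \ge 0$ for all $t \ge t_i^0$. Suppose not, and let $\tau$ be the first time $\hat{v}_i$ reaches $0$. If this occurs off the safety boundary ($g_i^s < 0$), then near $\tau$ the minimizer of Problem \ref{prb:controlLaw} is $a_i = 0$: the value closest to $0$ is feasible because the flow bound $\tfrac{|F_{\hat{p}}|}{F_v}\hat{v}_i \ge 0$, the safety constraint is slack, and the deadline is inactive. Hence $\dot{\hat{v}}_i = a_i - a_{i-1} = -a_{i-1} \ge 0$, contradicting $\hat{v}_i \downarrow 0$. If instead $\hat{v}_i = 0$ is reached on the boundary, then $g_i^s = 0$ with $\hat{v}_i = 0$ forces $\hat{p}_i = -\delta$ by \eqref{eq:rearSafety}, which is already the platoon. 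Thus $\hat{v}_i \ge 0$ throughout, so $\hat{v}_i^\ast \ge 0$.

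Combining these, $\dot{\hat{p}}_i = \hat{v}_i \ge 0$, so $\hat{p}_i$ is non-decreasing and bounded above by $-\delta$, hence convergent; since its derivative tends to $\hat{v}_i^\ast$, a strictly positive limit would send $\hat{p}_i \to \infty$, forcing $\hat{v}_i^\ast = 0$. It remains to show $\hat{p}_i \to -\delta$. Because off the boundary the optimizer is $a_i = 0$ and $\hat{v}_i \ge \hat{v}_i(t_i^0) > 0$, the gap closes at a rate bounded below and $i$ reaches $g_i^s = 0$ in finite time. Evaluating \eqref{eq:safetyDerivative} at $a_i = 0$ shows $\tfrac{d}{dt}g_i^s \ge 0$ there, so the constrained minimizer instead enforces $\tfrac{d}{dt}g_i^s = 0$ and the trajectory slides along $g_i^s = 0$. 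On this surface \eqref{eq:rearSafety} gives $\hat{p}_i + \delta = -\big(\hat{v}_i\tfrac{v_{\min}-v_i}{a_{\min}} + \tfrac{\hat{v}_i^2}{2a_{\min}}\big)$, and the right-hand side vanishes as $\hat{v}_i \to 0$; therefore $\hat{p}_i \to -\delta$. Together with $\hat{v}_i^\ast = 0$ this is platoon formation, and $t_i^f, t_{i-1}^f$ sufficiently large ensures both vehicles remain in the system long enough for this asymptotic convergence to take effect.

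The main obstacle is this final step: rigorously analyzing the sliding behavior on $g_i^s = 0$. I must verify that once the trajectory reaches the boundary it remains there (i.e., the minimizer of Problem \ref{prb:controlLaw} maintains $\tfrac{d}{dt}g_i^s = 0$ rather than leaving the surface), handle the edge case in which both $v_i$ and $v_{i-1}$ decrease toward $v_{\min}$ — where the sliding input approaches $a_{\min}$ and one must check against the infeasibility conditions \eqref{eq:infeasible1}–\eqref{eq:infeasible2}, which are excluded by hypothesis — and confirm that the boundedness argument for $\hat{p}_i$ survives when $v_{i-1}$ is itself decreasing. The speed-limit interactions and the indeterminate behavior of the sliding gain as $\hat{v}_i$ and $(v_i - v_{\min})$ both tend to $0$ are where the computation demands the most care.
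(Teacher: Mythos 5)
Your proposal is correct and follows essentially the same route as the paper's proof: the invariance of $\hat{v}_i \geq 0$ off the safety boundary via Lemma \ref{lma:upperBound} combined with the minimum-norm optimality of $a_i = 0$, and then the algebraic relation obtained by setting $g_i^s = 0$ (the paper's Eq.~\eqref{eq:vhat2}) to conclude that the gap closes until $\hat{p}_i + \delta = 0$. The monotone-convergence bookkeeping and the explicit sliding-mode verification you flag as the remaining obstacle are refinements of steps the paper treats tersely (it simply asserts that on the boundary $\hat{v}_i$ is positive, decreasing, and vanishes only at $\hat{p}_i = -\delta$), not a different argument.
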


\begin{proof}
First, consider the case when the rear-end safety constraint does not become active.
Assume $\hat{v}_i(t_1) < 0$ at some $t_1 > t_i^0$.
Continuity of $\hat{v}_i(t)$ implies that there is at least one non-zero interval of time $[t_0, t_1]$ such that $\hat{a}_i(t) < 0$ and $\hat{v}_i(t) \geq 0$ for $t\in[t_0, t_1]$.
Over any such interval, $\hat{v}_i(t) \geq 0$ implies that $a_i(t) = 0$ is a feasible control action.
Furthermore, Lemma \ref{lma:upperBound} implies $a_{i-1}(t) \leq 0$, which implies that $\hat{a}_i(t) \geq 0$.
This contradicts $\hat{a}_i(t) < 0$, therefore no such interval can exist and $\hat{v}_i(t) > 0$ for all $t > t_i^0$ as long as the safety constraint does not become active.

Next, consider the case when only the rear-end safety constraint is active, i.e., \eqref{eq:rearSafety} is strictly equal to zero.
In this case, solving \eqref{eq:rearSafety} for $\hat{v}_i(t)$ yields
\begin{align} \label{eq:vhat2}
    \hat{v}_i(t) = v_i(t) - v_{\min} &- \Big[ \big(v_i(t) - v_{\min}\big)^2 \notag\\
    &+ 2 |a_{\min}|\big(\hat{p}_i(t) + \delta\big) \Big]^{\frac{1}{2}},
\end{align}
where $\hat{p}_i(t) + \delta \leq 0$, and thus \eqref{eq:vhat2} implies $\hat{v}_i(t) \geq 0$ when $\hat{p}_i(t) + \delta < 0$.
Thus, $\hat{v}_i(t)$ is positive and decreasing and only reaches zero when $\hat{p}_i(t) + \delta = 0$, i.e., platoon formation occurs.
\end{proof}

Theorem \ref{thm:sufficient} is a sufficient condition for platooning, and can be recursively applied at any time $t_0$ to guarantee the convergence of any sequence of vehicles satisfying $v_i(t_0) < v_{i+1}(t_0) < \dots < v_{i+k}(t_0)$ for $k\in\mathbb{N}$.
We also note that platooning may occur when $v_i(t_0) > v_{i+1}(t_0)$, in particular if vehicle $i$ decelerates sufficiently fast such that $v_i(t_1) < v_{i+1}(t_1)$ for some $t_1 > t_0$.
In this case, Theorem \ref{thm:recursive} can be applied at $t=t_1$ to guarantee platoon formation.

Finally, the behavior of the front CAV $i=0$ depends on the context of the platooning problem.
The lead CAV may select any trajectory satisfying $a_i(t) \leq 0$ and $v_i(t) \geq v_{\min}$ under our framework.
For example, following $u_i(t) = 0$ could minimize transient energy operation while the drag force slows the vehicle down to the minimum speed.
Alternatively, to facilitate platoon formation, it may be practical to select $a_i(t) = a_{\min}$ to reach the minimum speed as fast as possible.
We apply the latter approach in the next sections to demonstrate emergent platoon formation in a simulated and physical experiment.

\section{Simulation Results} \label{sec:sim}

To validate our proposed control approach, we simulated
a road $1750$ m long with $3$ on and off ramps.
The on-ramps were located at $100$, $600$, and $1100$ m, and the off-ramps were at $500$, $1000$, and $1500$ m.
We simulated the flow of traffic over \blue{$140$} seconds, and we introduced vehicles to the system with a random delay $T \sim \mathcal{U}\big(0.5, 1.5\big)$ seconds.
For each CAV $i$, we selected its initial and exit positions from a uniform distribution over the four possible locations, i.e., the three on-ramps and an initial position of of $p_i(t_i^0) = 0$.
Similarly, $i$ may exit the highway at a distance of $p_i(t_i^f) = 1750$ or at any off-ramp beyond $p_i(t_i^0)$.
After selecting its initial position, we discarded any CAV that could not simultaneously satisfy \eqref{eq:vConst}, \eqref{eq:aConst}, and \eqref{eq:rearSafety} for itself and the vehicle behind it.
This approach resulted in $N = \blue{136}$ vehicles entering the highway over $\blue{140}$ seconds, yielding an average inflow of $3\blue{5}00$ vehicles per hour.

We selected the arrival time for each vehicle after determining its feasible initial state.
For each CAV $i$, we drew the arrival time $t_i^f$ from the uniform distribution,
\begin{equation}
    t_i^f \sim \mathcal{U} \Big( \frac{S_i - p_i(t_i^0)}{ v_i(t_i^0)}, \frac{S_i - p_i(t_i^0)}{ v_{\min}} \Big),
\end{equation}
which guaranteed satisfaction of the deadline constraint \eqref{eq:deadline} at $t_i^0$.
In the case that CAV $i$ later was unable to achieve its deadline, i.e., \eqref{eq:infeasible3} or \eqref{eq:infeasible4} became active, we relaxed the deadline constraint.
In particular, when $i$ satisfied \eqref{eq:infeasible3} or \eqref{eq:infeasible4} we removed the deadline constraint from Problem \ref{prb:controlLaw} for $i$.
If $i$ later became the leader of a platoon, we relaxed the drag minimization constraint \eqref{eq:flowConstraint} and required $i$ to accelerate until the deadline constraint \eqref{eq:deadline} was satisfied.
This achieved a balance between energy-minimization and deadline satisfaction while guaranteeing safety, and it circumvented the additional challenges of overtaking in a multi-lane highway environment.

The \blue{vehicle} trajectories are presented in Fig\blue{s}. \ref{fig:PvsT} \blue{and \ref{fig:PvsT-trans}}, which show the dynamic formation and break-up of platoons as vehicles enter and exit the system \blue{over two 60 second windows of the simulation}.
Figs. \ref{fig:PvsT1} show a zoomed in region of Fig. \ref{fig:PvsT} where vehicles entering at the $100$ m on-ramp form a platoon at approximately $\blue{175}$ m.
For further supplemental diagrams, videos of an experimental demonstration, and an in-depth discussion of the simulation see:  \url{https://sites.google.com/view/ud-ids-lab/cdp}.

\begin{figure}[ht]
    \centering
    \includegraphics[width=.9\linewidth]{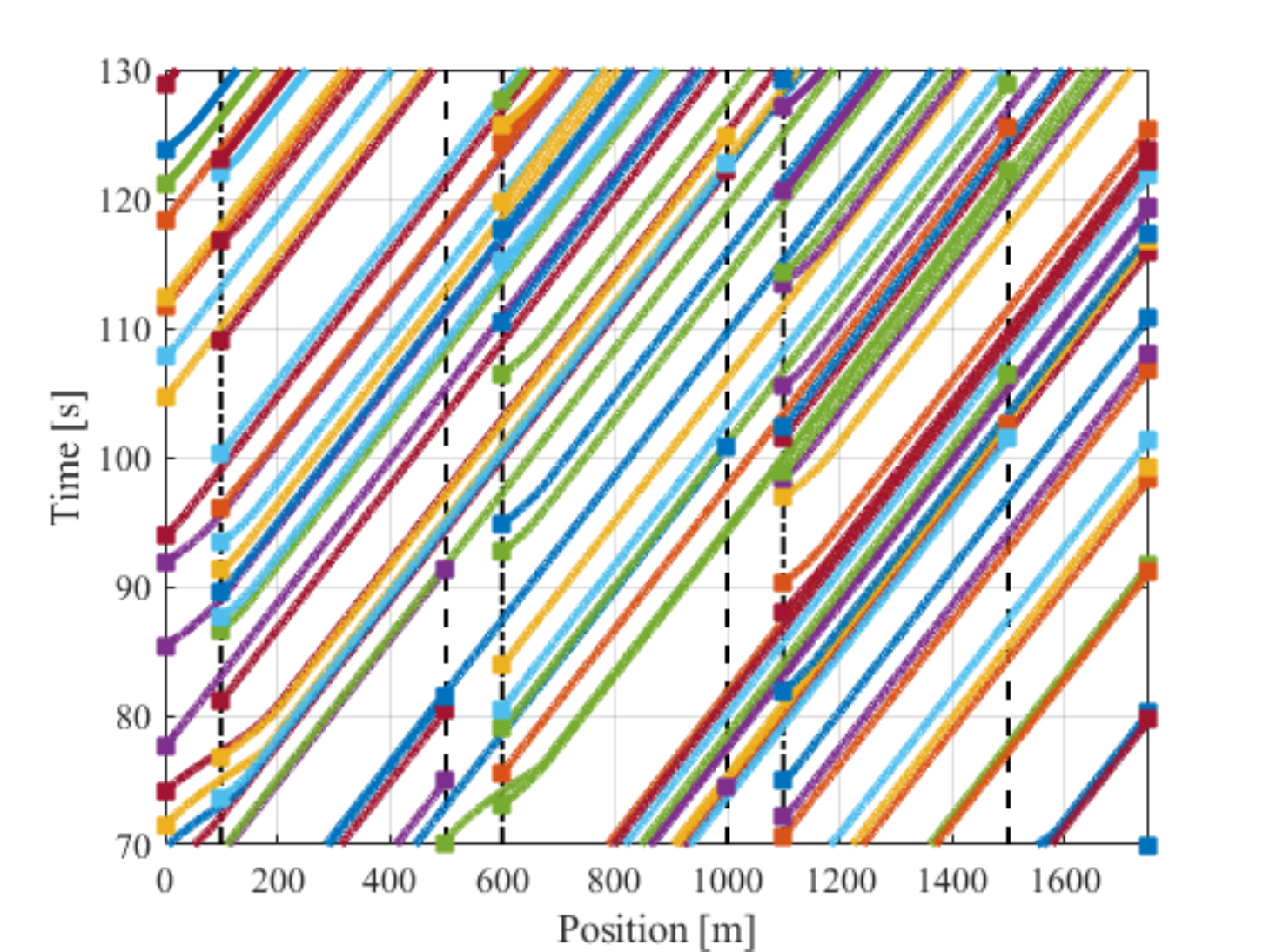}
    \caption{Position vs time plot for the \blue{$N=136$} CAVs \blue{over a $60$ second window of steady operation}. Squares correspond to vehicles entering and exiting the roadway; dash-dot lines correspond to on-ramps and dotted lines correspond to off-ramps.}
    \label{fig:PvsT}
\end{figure}

\begin{figure}[ht]
    \centering
    \includegraphics[width=.9\linewidth]{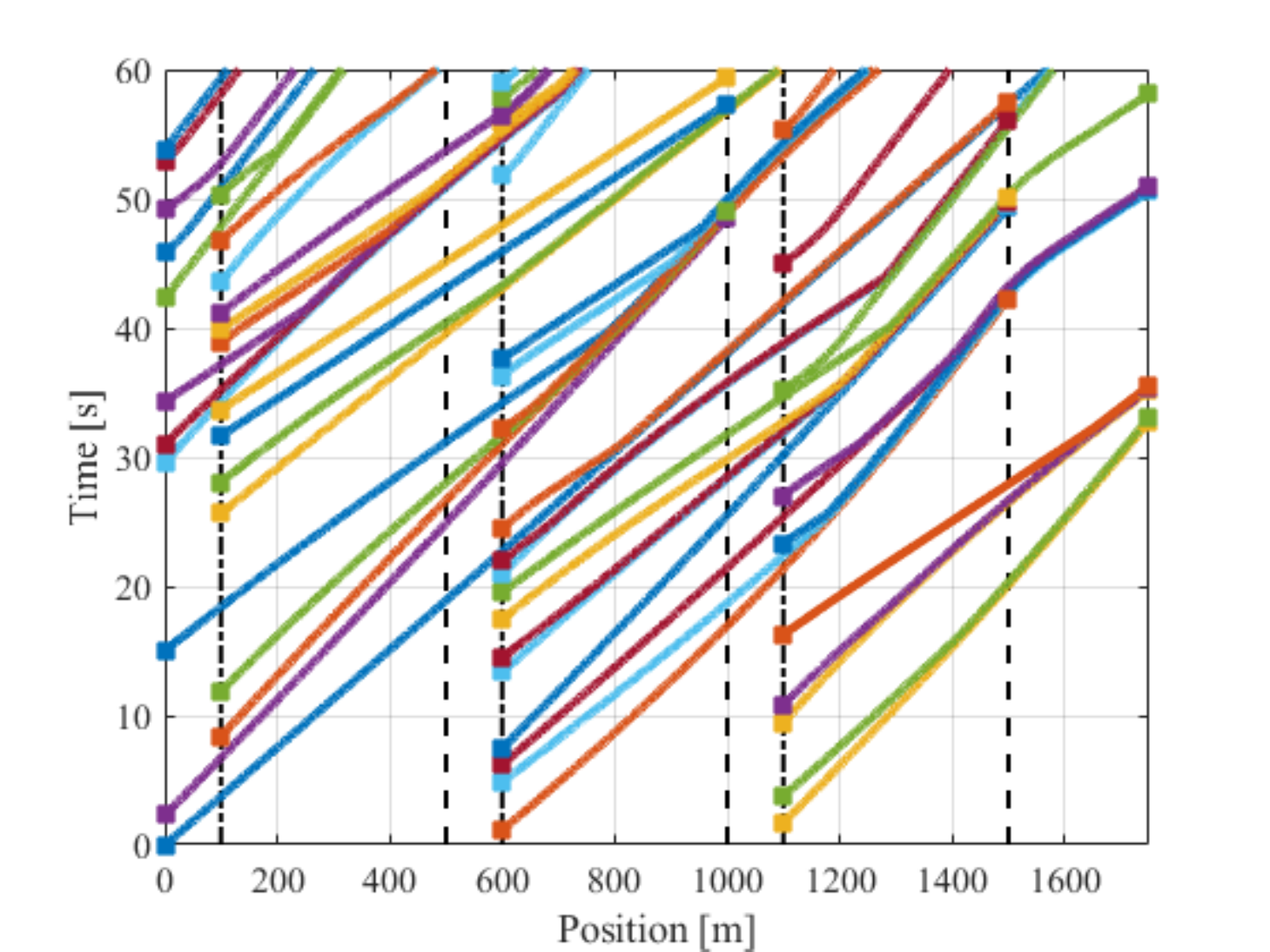}
    \caption{Position vs time plot for the \blue{$N=136$} CAVs \blue{over the initial $60$ second transient}. Squares correspond to vehicles entering and exiting the roadway; dash-dot lines correspond to on-ramps and dotted lines correspond to off-ramps.}
    \label{fig:PvsT-trans}
\end{figure}

\begin{figure}[ht]
    \centering
    \includegraphics[width=.85\linewidth]{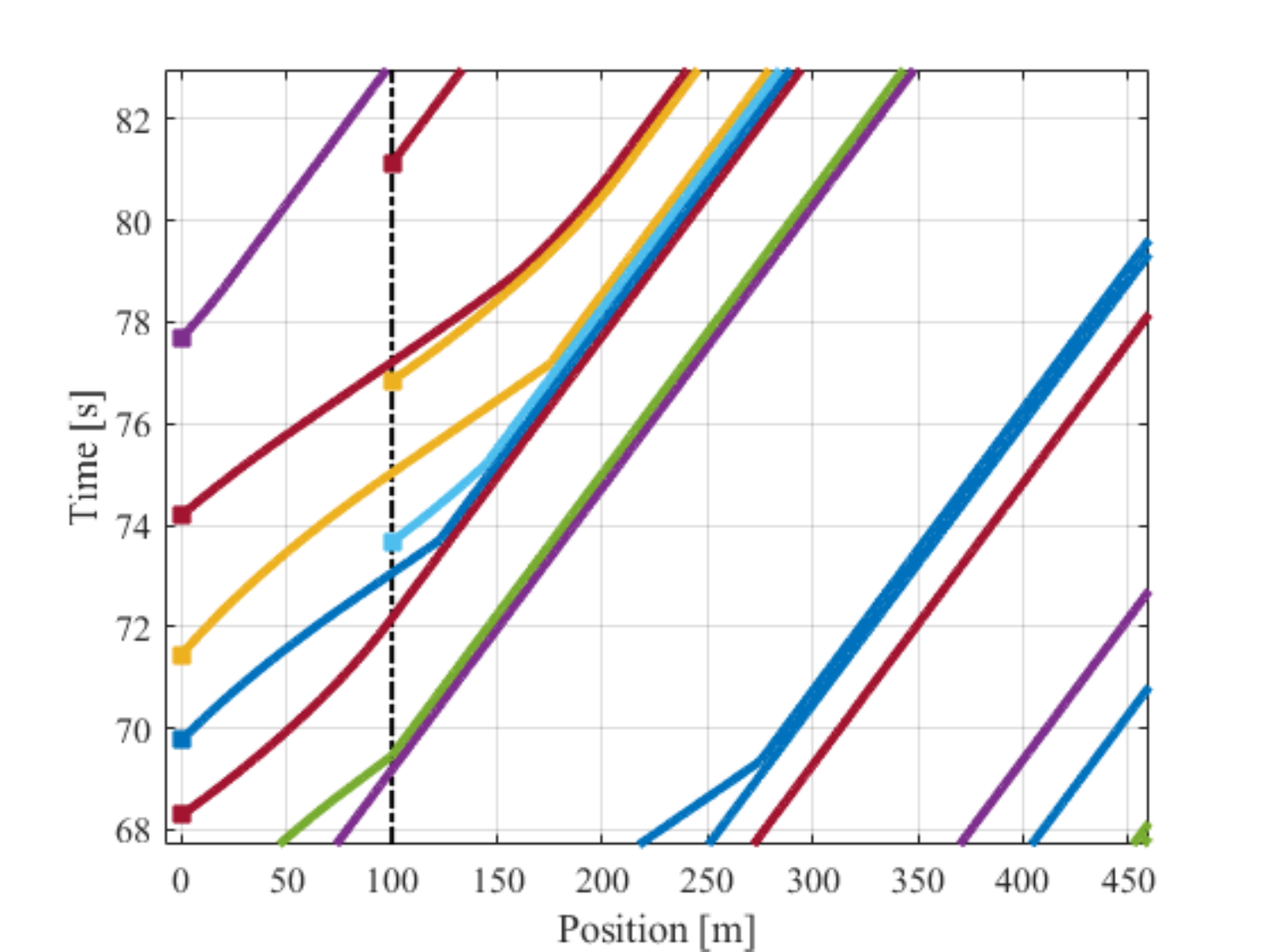}
    \caption{A close up where $4$ vehicles form a platoon near the on-ramp at $100$ m.}
    \label{fig:PvsT1}
\end{figure}

\section{Conclusion} \label{sec:conclusion}

In this article, we derived rigorously a decentralized control law to generate emergent platooning behavior in an open transportation network.
We derived the conditions that determine when platooning is an appropriate strategy, and proved that our proposed control law satisfies recursive feasibility.
We presented a sufficient condition that guarantees platooning, and demonstrated the performance of our descriptive control law in a simulation of an open transportation network, i.e., where vehicles can freely enter and exit.
Future work includes extending our analysis to $\mathbb{R}^2$, with applications to bicycle-riding agents, off-road vehicles, \blue{ and multi-lane overtaking}.
\blue{Finally, extensive simulations on larger-scale systems would be of value, in addition to experiments that capture the magnitude of noise and disturbances}. 


\bibliographystyle{IEEEtran}
\bibliography{mendeley, IDS_Pubs, other-refs}

\begin{thebibliography}{10}
\providecommand{\url}[1]{#1}
\csname url@rmstyle\endcsname
\providecommand{\newblock}{\relax}
\providecommand{\bibinfo}[2]{#2}
\providecommand\BIBentrySTDinterwordspacing{\spaceskip=0pt\relax}
\providecommand\BIBentryALTinterwordstretchfactor{4}
\providecommand\BIBentryALTinterwordspacing{\spaceskip=\fontdimen2\font plus
\BIBentryALTinterwordstretchfactor\fontdimen3\font minus
  \fontdimen4\font\relax}
\providecommand\BIBforeignlanguage[2]{{%
\expandafter\ifx\csname l@#1\endcsname\relax
\typeout{** WARNING: IEEEtran.bst: No hyphenation pattern has been}%
\typeout{** loaded for the language `#1'. Using the pattern for}%
\typeout{** the default language instead.}%
\else
\language=\csname l@#1\endcsname
\fi
#2}}

\bibitem{Oh2017}
H.~Oh, A.~R. Shirazi, C.~Sun, and Y.~Jin, ``{Bio-inspired self-organising
  multi-robot pattern formation: A review},'' \emph{Robotics and Autonomous
  Systems}, vol.~91, pp. 83--100, 2017.

\bibitem{Malikopoulos2016}
A.~Malikopoulos, ``{A duality framework for stochastic optimal control of
  complex systems},'' \emph{IEEE Transactions on Automatic Control}, vol.~61,
  no.~10, pp. 2756--2765, 2016.

\bibitem{Malikopoulos2020}
A.~A. Malikopoulos, L.~E. Beaver, and I.~V. Chremos, ``Optimal time trajectory
  and coordination for connected and automated vehicles,'' \emph{Automatica},
  vol. 125, no. 109469, 2021.

\bibitem{Lindsey2012ConstructionTeams}
Q.~Lindsey, D.~Mellinger, and V.~Kumar, ``{Construction with quadrotor
  teams},'' \emph{Autonomous Robots}, 2012.

\bibitem{Corts2009}
J.~Cortes, ``{Global formation-shape stabilization of relative sensing
  networks},'' in \emph{Proceedings of the American Control Conference}, 2009.

\bibitem{Egerstedt2018RobotAutonomy}
M.~Egerstedt, J.~N. Pauli, G.~Notomista, and S.~Hutchinson, ``{Robot ecology:
  Constraint-based control design for long duration autonomy},'' \emph{Annual
  Reviews in Control}, vol.~46, pp. 1--7, 1 2018.

\bibitem{Notomista2019AnSystems}
G.~Notomista, S.~Mayya, S.~Hutchinson, and M.~Egerstedt, ``{An optimal task
  allocation strategy for heterogeneous multi-robot systems},'' in \emph{18th
  European Control Conference}, 6 2019, pp. 2071--2076.

\bibitem{Ibuki2020Optimization-BasedBodies}
T.~Ibuki, S.~Wilson, J.~Yamauchi, M.~Fujita, and M.~Egerstedt,
  ``{Optimization-Based Distributed Flocking Control for Multiple Rigid
  Bodies},'' \emph{IEEE Robotics and Automation Letters}, vol.~5, no.~2, pp.
  1891--1898, 4 2020.

\bibitem{Beaver2020Energy-OptimalConstraints}
L.~E. Beaver, M.~Dorothy, C.~Kroninger, and A.~A. Malikopoulos,
  ``{Energy-Optimal Motion Planning for Agents: Barycentric Motion and
  Collision Avoidance Constraints},'' in \emph{2021 American Control
  Conference}, 2021, pp. 1037--1042.

\bibitem{athans1968}
M.~Athans, ``A unified approach to the vehicle-merging problem,'' 1968.

\bibitem{shladover1991}
S.~E. Shladover, C.~A. Desoer, J.~K. Hedrick, M.~Tomizuka, J.~Walrand, W.-B.
  Zhang, D.~H. McMahon, H.~Peng, S.~Sheikholeslam, and N.~McKeown, ``Automated
  vehicle control developments in the path program,'' \emph{IEEE Transactions
  on vehicular technology}, vol.~40, no.~1, pp. 114--130, 1991.

\bibitem{wang2017}
Z.~Wang, G.~Wu, P.~Hao, K.~Boriboonsomsin, and M.~Barth, ``Developing a
  platoon-wide eco-cooperative adaptive cruise control (cacc) system,'' in
  \emph{2017 IEEE Intelligent Vehicles Symposium}, 2017, pp. 1256--1261.

\bibitem{mahbub2021_platoonMixed}
A.~M.~I. Mahbub and A.~A. Malikopoulos, ``{A Platoon Formation Framework in a
  Mixed Traffic Environment},'' \emph{IEEE Control Systems Letters (LCSS)},
  vol.~6, pp. 1370--1375, 2021.

\bibitem{Bedruz2019DynamicOptimization}
R.~A.~R. Bedruz, J.~M.~Z. Maningo, A.~H. Fernando, A.~A. Bandala, R.~R.~P.
  Vicerra, and E.~P. Dadios, ``{Dynamic Peloton Formation Configuration
  Algorithm of Swarm Robots for Aerodynamic Effects Optimization},'' in
  \emph{Proceedings of the 7th International Conference on Robot Intelligence
  Technology and Applications}, 2019, pp. 264--267.

\bibitem{Fredette2017Fuel-SavingControl}
D.~Fredette, ``{Fuel-Saving behavior for Multi-Vehicle Systems: Analysis,
  Modeling, and Control},'' Ph.D. dissertation, The Ohio State University,
  2017.

\bibitem{Yang2016LoveControl}
J.~Yang, R.~Grosu, S.~A. Smolka, and A.~Tiwari, ``{Love thy neighbor:
  V-formation as a problem of model predictive control},'' in \emph{Leibniz
  International Proceedings in Informatics, LIPIcs}, vol.~59, 2016.

\bibitem{Beaver2020AnFlockingb}
L.~E. Beaver and A.~A. Malikopoulos, ``{An Overview on Optimal Flocking},''
  \emph{Annual Reviews in Control}, vol.~51, pp. 88--99, 2021.

\bibitem{Zhu2019Barrier-function-basedConstraint}
Y.~Zhu and F.~Zhu, ``{Barrier-function-based distributed adaptive control of
  nonlinear CAVs with parametric uncertainty and full-state constraint},''
  \emph{Transportation Research Part C: Emerging Technologies}, vol. 104, pp.
  249--264, 7 2019.

\bibitem{Hendrickx2020TrajectoryPlatoons}
J.~M. Hendrickx, B.~Gerencs{\'{e}}r, and B.~Fidan, ``{Trajectory Convergence
  From Coordinate-Wise Decrease of Quadratic Energy Functions, and Applications
  to Platoons},'' \emph{IEEE Control Systems Letters}, vol.~4, no.~1, pp.
  151--156, 1 2020.

\bibitem{Tagliabue2020TouchMultirotor}
A.~Tagliabue, A.~Paris, S.~Kim, R.~Kubicek, S.~Bergbreiter, and J.~P. How,
  ``{Touch the wind: Simultaneous airflow, drag and interaction sensing on a
  multirotor},'' in \emph{IEEE International Conference on Intelligent Robots
  and Systems}, 2020, pp. 1645--1652.

\bibitem{Emam2019RobustTestbed}
Y.~Emam, P.~Glotfelter, and M.~Egerstedt, ``{Robust Barrier Functions for a
  Fully Autonomous, Remotely Accessible Swarm-Robotics Testbed},'' in
  \emph{58th Conference on Decision and Control}, 2019, pp. 3984--3990.

\bibitem{Notomista2019Constraint-DrivenSystems}
G.~Notomista and M.~Egerstedt, ``{Constraint-Driven Coordinated Control of
  Multi-Robot Systems},'' in \emph{Proceedings of the 2019 American Control
  Conference}, 2019.

\bibitem{Wang2017SafetySystems}
L.~Wang, A.~D. Ames, and M.~Egerstedt, ``{Safety barrier certificates for
  collisions-free multirobot systems},'' \emph{IEEE Transactions on Robotics},
  vol.~33, no.~3, pp. 661--674, 2017.

\bibitem{Lofberg2012OopsMPC}
J.~L{\"{o}}fberg, ``{Oops! I cannot do it again: Testing for recursive
  feasibility in MPC},'' \emph{Automatica}, vol.~48, pp. 550--555, 2012.

\bibitem{Shi2021AreFormation}
M.~Shi and J.~M. Hendrickx, ``{Are energy savings the only reason for the
  emergence of bird echelon formation?}'' \emph{ArXiv: 2103.13381}, 3 2021.

\end{thebibliography}

\end{document}